\newcommand{\truev}{\mathsf{true}}
\providecommand{\abs}[1]{\lvert#1\rvert}
\newcommand{\sink}{\mathsf{sink}}
\newcommand{\calM}{\mathcal{M}}
\newcommand{\calA}{\mathcal{A}}
\newcommand{\calG}{\mathcal{G}}
\newcommand{\calAP}{\mathcal{AP}}
\definecolor{mygrey}{HTML}{737373}
\newcommand{\sq}{\mathsf{Sq}}
\newcommand{\cir}{\mathsf{Cir}}
\newcommand{\hex}{\mathsf{Hex}}
\newcommand{\dia}{\mathsf{Dia}}
\newcommand{\tri}{\mathsf{Tri}}
\newcommand{\level}{\mathsf{Level}}
\acrodef{gssp}[GSSP]{generalized stochastic shortest path}
\acrodef{mcts}[MCTS]{Monte-Carlo Tree Search}
\acrodef{ltl}[LTL]{linear temporal logic}
\acrodef{dfa}[DFA]{deterministic finite-state automaton}
\acrodef{mdp}[MDP]{Markov decision process}
\acrodef{pomdps}[POMDPs]{partially observable Markov decision processes}
\acrodef{slam}[SLAM]{simultaneous localization and mapping}
\newtheorem{theorem}{Theorem}
\newtheorem{proposition}{Proposition}
\newtheorem{definition}{Definition}
\newtheorem*{assumption*}{Assumption}
\newtheorem*{problem*}{Problem}
\newtheorem{problem}{Problem}
\newtheorem{example}{Example}
\newtheorem*{summary*}{Summary}
\def\thetitle{Optimal Temporal Logic Planning in Probabilistic Semantic Maps}
\def\theauthor{Jie Fu, Nikolay Atanasov, Ufuk Topcu, and George J. Pappas}
\begin{document}
\title{\vspace{0.3in}\LARGE \bf \thetitle}

\author{\theauthor
  \thanks{J. Fu, N. Atanasov, and G. Pappas are with the Department of
    Electrical and Systems Engineering, University of Pennsylvania,
    Philadelphia, PA 19104, USA, {\tt\small\{jief, atanasov,
      pappasg\}@seas.upenn.edu}.}%
  \thanks{U. Topcu is with the Department of Aerospace Engineering and
    Engineering Mechanics, University of Texas, Austin, TX 78712, USA,
    {\tt\small utopcu@utexas.edu}.}%
  \thanks{ This work is supported by AFRL \# FA8650-15-C-2546, ONR \#
    N000141310778, NSF \# 1550212, and TerraSwarm, one of six centers
    of STARnet, a Semiconductor Research Corporation program sponsored
    by MARCO and DARPA.} }
\maketitle

\begin{abstract}
  This paper considers robot motion planning under temporal logic
  constraints in probabilistic maps obtained by semantic
  simultaneous localization and mapping (SLAM). The uncertainty in a
  map distribution presents a great challenge for obtaining
  correctness guarantees with respect to the linear temporal logic
  (LTL) specification. We show that the problem can be formulated as an
  optimal control problem in which both the semantic map
  and the logic formula evaluation are stochastic. Our first
  contribution is to reduce the stochastic control problem for a
  subclass of LTL to a deterministic shortest path problem by
  introducing a confidence parameter $\delta$. A robot trajectory
  obtained from the deterministic problem is guaranteed to have 
  minimum cost and to satisfy the logic specification in the true
  environment with probability $\delta$. Our second contribution is to
  design an admissible heuristic function that guides the planning in
  the deterministic problem towards satisfying the
  temporal logic specification. This allows us to obtain an optimal
  and very efficient solution using the A* algorithm. The performance
  and correctness of our approach are demonstrated in a simulated
  semantic environment using a differential-drive robot.

\end{abstract}

\section{Introduction}
\label{sec:introduction}

This paper addresses robot motion planning in uncertain environments with tasks specified by \ac{ltl} co-safe formulas. A map distribution, obtained from a semantic \ac{slam} algorithm \citep{Nuchter_RAS08,Atanasov_SemanticLocalization_RSS14,slam++,semantic_sfm}, facilitates natural robot task specifications in terms of objects and landmarks in the environment. For example, we can require a robot to ``go to a room where there is a desk and two chairs'' instead of giving it exact target coordinates.  One could even describe tasks when the entire map is not available but is to be obtained as the robot explores its environment. Meanwhile, temporal logic allows one to specify rich, high-level robotic tasks. Hence, a meaningful question we aim to answer is the following. \textit{Given a semantic map distribution, how does one design a control policy that enables the robot to efficiently accomplish temporal logic tasks with high probability, despite the uncertainty in the true environment?}



This question is motivated by two distinct lines of work, namely, control under temporal logic constraints and multi-task \ac{slam}. Control synthesis with temporal logic specifications has been studied for both deterministic \citep{kloetzer2008fully,rrg_mu_calc,bhatia2010sampling} and stochastic systems \citep{lahijanian2010motion,ding2011mdp}.  A recent line of work focuses on the design problem in the presence of unknown and uncertain environments. In general, three types of uncertainty are considered: sensor uncertainty~\citep{Johnson01052015}, incomplete environment models~\citep{kress2009temporal,guo2013revising,livingston2012backtracking, livingston2013patching}, or uncertainty in the robot dynamics~\citep{wolff2012robust,fj_rss2014}. \citet{Johnson01052015} employ a model checking algorithm to evaluate the fragility of the control design with respect to temporal logic tasks when sensing is uncertain. To handle unexpected changes in the environment and incompleteness in the environment model, \citet{kress2009temporal} develop a sensor-based reactive motion planning method that guarantees the correctness of the robot behaviors under temporal logic constraints. \citet{livingston2012backtracking,livingston2013patching} propose a way to efficiently modify a nominal controller through local patches for assume-guarantee \ac{ltl} formulas.  \citet{guo2013revising} develop a revision method for online planning in a gradually discovered environment. Probabilistic uncertainty is studied in \citep{wolff2012robust,fj_rss2014}. \citet{wolff2012robust} develop a robust control method with respect to temporal logic constraints in a stochastic environment modeled as an interval Markov decision process (MDP). \citet{fj_rss2014} develop a method that learns a near-optimal policy for temporal logic constraints in an initially unknown stochastic environment. However, existing work abstracts the system and its environment into discrete models, such as, MDPs, two-player games, and plans in the discrete state space. In this work, the environment uncertainty is represented by a continuous map distribution which makes methods for discrete systems not applicable.  Moreover, control design with map distributions obtained by uncertain sensor and semantic \ac{slam} has not been addressed in literature.



While temporal logic is expressive in specifying a wide range of robot behaviors, recent advances in \ac{slam} motivate the integration of task planning with simultaneously discovering an initially unknown environment using \ac{slam} algorithms. Multi-task \ac{slam} is proposed in \citet{guez2010multi}.  The authors consider a planning problem in which a mobile robot needs to map an unknown environment, while localizing itself and maximizing long-term rewards. The authors formulate the decision-making problem as a partially observable Markov decision process and plan with both the mean of the robot pose and the mean of the map distribution.  \citet{bachrach2012estimation} develop a system for visual odometry and mapping using an RGB-D camera. The authors employ the Belief Roadmap algorithm \citep{missiuro2006adapting} to generate the shortest path from the mean robot pose to a goal state, while propagating uncertainties along the path. It is difficult, however, to extend these approaches to temporal logic planning with probabilistic semantic maps. Unlike reachability and reward maximization, the performance criteria induced by \ac{ltl} formulas require a rigorous way to reason about the uncertainty in the map distribution.  To tackle these challenges, our method brings together the notions of robustness and probabilistic correctness to satisfy quantitative temporal logic specifications in the presence of environment uncertainty given by map distributions.

 This work makes the following contributions: \begin{itemize}
  \item We formulate a stochastic optimal control problem for planning robot motion in a probabilistic semantic map under temporal logic constraints.
  \item For a subclass of LTL we reduce the stochastic problem to a deterministic shortest path problem that can be solved very efficiently. We prove that for a given confidence parameter $\delta$, the robot trajectory obtained from the deterministic problem, if it exists, satisfies the logic specification with probability $\delta$ in the true environment.
  \item We design an admissible heuristic for A* to compute the optimal solution of the deterministic problem efficiently.
\end{itemize}


\section{Problem Formulation}
\label{sec:problem} 
In this section, we introduce models for the robot and its uncertain environment, represented by a semantic map distribution. Using temporal logic as the task specification language, we formulate a stochastic optimal control problem. 

\subsection{Robot and environment models}
Consider a mobile robot whose dynamics are governed by the following discrete-time motion model:
\begin{equation}
\label{eq:system}
x_{t+1}= f(x_t, u_t)
\end{equation}
where $x_t = (x_t^p,x_t^a) \in X$ is the robot state, containing its pose $x_t^p$ and other variables $x_t^a$ such as velocity and acceleration and $u_t \in U$ is the control input, selected from a \textit{finite} space of admissible controls. A trajectory of the robot, for $t\in \mathbb{N}\cup \{\infty\}$, is a sequence of states $x_{0:t} :=x_0x_1\ldots x_t$, where $x_k\in X$ is the state at time $k$.


The robot operates in an environment modeled by a semantic map $\mathcal{M} := \{l_1,\ldots,l_M\}$ consisting of $M$ landmarks. Each landmark $l_i := (l_i^p,l_i^c) \in \mathcal{M}$ is defined by its pose $l_i^p$ and class $l_i^c\in\mathcal{C}$, where $\mathcal{C}$ is a finite set of classes (e.g., table, chair, door, etc.). The robot does not know the true landmark poses but has access to a probability distribution $\mathcal{P}$ over the space of all possible maps. Such a distribution can be produced by a semantic SLAM algorithm~\citep{slam++,semantic_sfm} and typically consists of a Gaussian distribution over the landmark poses and a discrete distribution over the landmark classes. More precisely, we assume $\mathcal{P}$ is determined by parameters $(\bar{l}^p, \Sigma^p, \{\rho_i^c\}_{i=1}^M)$ such that $l^p \sim \mathcal{N}(\bar{l}^p,\Sigma^p)$ and $l_i^c$ is generated by the probability mass function $\rho_i^c$. In this work, we suppose that the class of each landmark is known and leave the case of uncertaint landmark classes for future work.

%

\subsection{Temporal logic specifications}
We use linear temporal logic (\ac{ltl}) to specify the robot's task in the environment. \ac{ltl} formulas \citep{Pnueli198145} can describe temporal ordering of events along the robot trajectories and are defined by the following grammar:
$\phi:=p \mid \neg \phi \mid \phi_1\lor \phi_2 \mid \bigcirc \phi
\mid \phi_1\mathcal{U} \phi_2$,
where $p\in \calAP$ is an atomic proposition, and $\bigcirc$ and
$\mathcal{U}$ are temporal modal operators for ``next'' and
``until''. Additional temporal logic operators are derived from basic
ones: $\lozenge \varphi:= \truev\; \mathcal{U}\varphi$ (eventually)
and $\square \varphi := \neg \lozenge \neg \varphi$ (always).
We assume that the robot's task is given by an \ac{ltl} co-safe formula \cite{kupferman2001model}, which allows checking its satisfaction using a finite-length robot trajectory.

The \ac{ltl} formula is specified over a finite set of atomic propositions that are defined over the robot state space $X$ and the environment map $\calM$. Examples of atomic propositions include:
\begin{equation}
\label{eq:ap_def}
\begin{aligned}
\alpha_i^p(r): \; &d(x^p,l^p_i) \leq r &&\text{ for } r \in \mathbb{R}, i \in \{1, \ldots, M\},\\
\alpha_i^c(Y): \; &l^c_i \in Y &&\text{ for } Y \subseteq \mathcal{C}, i \in \{1, \ldots, M\}.
\end{aligned}
\end{equation}
Proposition $\alpha_i^p(r)$ evaluates true when the robot is within $r$ units distance from landmark $i$, while proposition $\alpha_i^c(Y)$ evaluates true when the class of the $i$-th landmark is in the subset $Y$ of classes. In order to interpret an \ac{ltl} formula over the trajectories of the robot system, we use a labeling function to determine which atomic propositions hold true for the current robot pose.

\begin{definition}[Labeling function\footnote{When the map is fixed, our labeling function definition reduces to the commonly-used definition in robotic motion planning under temporal logic constraints \citep{ding2011mdp}.}]
Let $\calAP$ be a set of atomic propositions and $\mathsf{M}$ be the set of all possible maps. A labeling function $L:X \times \mathsf{M} \rightarrow 2^\calAP$  maps a given robot state $x \in X$ and map $\mathcal{M} \in \mathsf{M}$ to a set $L(x, \calM)$  of atomic propositions that evaluate true.
\end{definition}
For robot trajectory $x_{0:t}$ and map $\calM \in \mathsf{M}$, the \emph{label sequence of $x_{0:t}$} in $\calM$, denoted $L(x_{0:t}, \calM)$, is such that $L(x_{0:t}, \calM) = L(x_0, \calM)L(x_1, \calM)L(x_2, \calM)\ldots L(x_t, \calM)$.
Given an \ac{ltl} co-safe formula $\varphi$, one can construct a \ac{dfa} $\calA_\varphi=(Q,2^\calAP, T, q_0, F) $ where $Q, 2^\calAP, q_0, F$ are a finite set of states, the alphabet, the initial state, and a set of final states, respectively. $T: Q\times 2^\calAP\rightarrow Q$ is a transition function such that $T(q,a)$ is the state that is reached with input $a$ at state $q$. We extend the transition function in the usual way\footnote{Notation: Let $A$ be a finite set. Let $A^\ast, A^\omega$ be the set of finite and infinite words over $A$. Let $\lambda = A^0$ be the empty string. Abusing notation slightly, we use $\emptyset$ and $\lambda$ interchangeably. For $w \in A^\omega$, if there exist $u\in A^\ast$ and $v \in A^\omega $ such that $w=uv$ then $u$ is a \emph{prefix} of $w$ and $v$ is a \emph{suffix} of $w$.}:
 $T(q,uv)=T(T(q,u), v)$ for $u,v\in (2^\calAP)^\ast$. A word $w$ is \emph{accepted} in $\calA_\varphi $ if and only if $T(q_0,w)\in F$. The set of words accepted by $\calA_\varphi$ is the \emph{language} of $\calA_\varphi$, denoted $\mathcal{L}(\calA_\varphi)$.


We say that a robot trajectory $x_{0:\infty}$ satisfies the \ac{ltl} formula $\varphi$ in the map $\calM$ if and only if there is $k\ge 0$ such that $L(x_{0:k}, \calM) \in \mathcal{L}(\calA_{\varphi})$. Then, $x_{0:k}$ is called a \textit{good prefix} for the formula $\varphi$. Furthermore, $\calA_\varphi$ accepts exactly the set of good prefixes for $\varphi$ and for any state $q\in F$, it holds that $T(q,a )\in F$ for any $a\in 2^\calAP$.

We are finally ready for a formal problem statement.
\begin{problem}
\label{prob:gssp}
Given an initial robot state $x_0 \in X$, a semantic map distribution $\mathcal{P}$, and an \ac{ltl} co-safe formula $\varphi$ represented by a \ac{dfa} $\calA_\varphi$, choose a stopping time $\tau$ and a sequence of control policies $u_t \in U$ for $t = 0,1,\ldots,\tau$ that maximizes the probability of the robot satisfying $\varphi$ in the true environment $\mathcal{M}$ while minimizing its motion cost:
\begin{align*}
  \min_{\tau,u_0,u_1,\ldots, u_\tau} \; & \mathbb{E}\big[\sum_{t=0}^\tau c(x_t,x_{t+1})\big]  + \kappa \mathbb{P}( q_{\tau+1} \notin F)\\
  \text{s.t.} \quad & x_{t+1} = f(x_t,u_t),\\
  &q_{t+1} = T(q_t,L(x_{t+1},\mathcal{M})), \; \forall 0\le t <\tau, 
\end{align*}
where $c$ is a positive-definite motion cost function, which satisfies the triangle inequality, and $\kappa \geq 0$ determines the relative importance of satisfying the specification versus the total motion cost. 
\end{problem}

\textit{Remark}: The optimal cost of Problem \ref{prob:gssp} is bounded below by 0 due to the assumptions on $c,\kappa$ and bounded above by $\kappa$, obtained by stopping immediately without satisfying $\varphi$, i.e., $\tau = 0$.

\section{Planning to be Probably Correct}
\label{sec:reduction}
The map uncertainty in Problem \ref{prob:gssp} leads to uncertainty in
the evaluation of the atomic propositions and hence to uncertainty in
the robot trajectory labeling. In the meantime, the automaton
state cannot be observed.  Rather than solving the resulting
optimal control problem with partial observability, we
propose an alternative solution that generates a near-optimal
plan with a probabilistic correctness guarantee for the temporal logic
constraints. The main idea is to convert the original semantic map
distribution to a high-confidence deterministic representation and solve a
deterministic optimal control problem with this new
representation. The advantage is that we can solve the deterministic
problem very efficiently and still provide a correctness guarantee.
This avoids the need for sampling-based methods in the continuous space 
of map distributions, which become computationally expensive when planning in large environments.
To this end, we use a confidence region around the mean
$\bar \calM:=\{(\bar{l}_i^p,l_i^c)\}_{i=1}^M$ of the semantic map
distribution $\mathcal{P}$ to extend the definition of the labeling
function. 

\begin{definition}[$\delta$-Confident labeling function]
\label{def:confident_labeling}
Given a robot state $x \in X$, a map distribution $\mathcal{P}$, and a parameter $\delta\in (0,1)$, a $\delta$-confident labeling function is defined as follows:
\[
L^\delta(x,\mathcal{P}) := \begin{cases}
  L(x,\bar \calM) & \text{if  $L(x, \bar \calM) = L(x, m)$ for all maps}\\
  & \text{$m$ in the $\delta$-confidence region of $\mathcal{P}$,}\\
  \emptyset & \text{ otherwise}.
\end{cases}
\]
\end{definition}

We now explain the intuition for defining the $\delta$-confident labeling
function as in Def.~\ref{def:confident_labeling}. For a given robot
trajectory $x_{0:t}$, rather than maintaining a distribution over the
possible label sequences, the robot keeps only a sequence of labels
that, with probability $\delta$, is a subsequence\footnote{For a word
  $u\in A^\omega$, $u$ is a \emph{subsequence} of $w$ if $u$ can be obtained from $w$ by replacing
  symbols with the empty string $\lambda$.} of the label sequence
$L(x_{0:t}, \calM)$ in the true environment. This statement is made precise in the following proposition.

\begin{proposition}
\label{prop:relatinglabelseq}
Given a robot trajectory $x_{0:t}$ and a map distribution
$\mathcal{P}$, $L^\delta(x_{0:t}, \mathcal{P}) $ is a subsequence of
$L(x_{0:t}, \calM)$ with probability $\delta$.
\end{proposition}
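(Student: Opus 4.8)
The plan is to collapse the entire probabilistic claim onto a single high-probability event, namely that the true map $\calM$ lies inside the $\delta$-confidence region $R_\delta$ of $\mathcal{P}$. By the definition of a $\delta$-confidence region, the draw $\calM \sim \mathcal{P}$ satisfies $\calM \in R_\delta$ with probability $\delta$; call this event $E$. Everything else in the argument will be deterministic once we condition on $E$, which is precisely what makes the target probability equal to $\delta$ rather than something smaller.

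First I would fix the trajectory $x_{0:t}$ and examine the claim position by position. For each index $k \in \{0, \ldots, t\}$, Def.~\ref{def:confident_labeling} gives a dichotomy: either $L^\delta(x_k, \mathcal{P}) = L(x_k, \bar{\calM})$ (the confident case, where the label agrees across all of $R_\delta$), or $L^\delta(x_k, \mathcal{P}) = \emptyset$. In the confident case, since $\calM \in R_\delta$ on the event $E$, the defining condition $L(x_k, \bar{\calM}) = L(x_k, m)$ for all $m \in R_\delta$ applies in particular to $m = \calM$, so $L^\delta(x_k, \mathcal{P}) = L(x_k, \calM)$: the $k$-th symbol of $L^\delta(x_{0:t}, \mathcal{P})$ coincides with the $k$-th symbol of $L(x_{0:t}, \calM)$. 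In the empty case, the $k$-th symbol is $\lambda$ and simply vanishes from the word.

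Next I would assemble these per-position facts into the subsequence relation. Conditioned on $E$, the word $L^\delta(x_{0:t}, \mathcal{P})$ is obtained from $L(x_{0:t}, \calM)$ by leaving the confident positions unchanged and replacing every empty position with $\lambda$. By the paper's definition of subsequence (replacing symbols with the empty string $\lambda$), this exhibits $L^\delta(x_{0:t}, \mathcal{P})$ as a subsequence of $L(x_{0:t}, \calM)$. Hence the subsequence property holds on all of $E$, and therefore with probability at least $\bbP(E) = \delta$, as claimed.

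The step I expect to be the conceptual crux is recognizing that one global event $E$ controls the match at every position simultaneously. A careless argument would bound the failure probability at each of the $t+1$ positions separately and union-bound, degrading the guarantee (roughly to $1 - (t+1)(1-\delta)$). The definition of $L^\delta$ is engineered precisely so that membership $\calM \in R_\delta$ is the only randomness needed: a mismatch can occur at position $k$ only when $L^\delta(x_k,\mathcal{P})\neq\emptyset$ yet $L(x_k,\calM)\neq L(x_k,\bar\calM)$, which forces $\calM \notin R_\delta$; thus all failures are contained in the complement of $E$, and no union bound over time is required. The remaining care is purely bookkeeping about the $\emptyset = \lambda$ convention and the fact that dropping symbols is exactly the subsequence operation.
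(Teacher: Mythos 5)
Your proof is correct and takes essentially the same route as the paper's: a position-by-position application of the dichotomy in Def.~\ref{def:confident_labeling}, with all of the randomness reduced to whether the true map $\calM$ falls in the $\delta$-confidence region. If anything, your version is more careful than the paper's own proof, which asserts per-position equality ``with probability $\delta$'' and then concludes for all positions at once ``since $k$ is arbitrary''---your explicit global event $E$ is precisely what licenses that simultaneous conclusion without any union bound over time.
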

\begin{proof}
See Appendix~\ref{subsec:proof_relatinglabelseq}
\end{proof}


Intuitively, a label $L(x_k,\calM)$ is preserved at the $k$-th position of $L^\delta(x_{0:t}, \mathcal{P})$ if for any two sample maps $m, m'$ in the $\delta$-confidence region, $L(x_i,m)= L(x_i,m')$. Otherwise, it is replaced with $\emptyset$.  Next, we show that
when the \ac{ltl} formula $\varphi$ satisfies a particular property,
if $L^\delta(x_{0,t},\mathcal{P})$ is accepted by the \ac{dfa}
$\calA_\varphi$, then with probability $\delta$, $x_{0:t}$ satisfies
the \ac{ltl} specification $\varphi$. 
The required property is that the formal language characterization of the logic formula translates to a \emph{simple polynomial} \citep{pin2010mathematical}. An $\omega$-regular language $\cal L$ over an alphabet $A$ is
\emph{simple monomial} if and only if it is of the form
\[
A^\ast a_1 A^\ast a_2A^\ast \ldots A^\ast a_k A^\ast 
(A^\ast b_1  A^\ast b_2A^\ast \ldots A^\ast b_\ell A^\ast)^\omega
\]
where $a_1,a_2,\ldots, a_k, b_1,b_2,\ldots, b_\ell \in A$, $k \ge 0$,
and $\ell \ge 0$. A finite union of simple monomials is called a
\emph{simple polynomial}.


\begin{theorem}
\label{thm1}
  If the the language $\mathcal{L}(\calA_\varphi)$ of $\calA_\varphi$ is a
  simple polynomial, then $L^\delta(x_{0:\tau},\mathcal{P}) \in \mathcal{L}(\calA_\varphi)$
  implies that $\mathbb{P}(L(x_{0:\tau},\mathcal{M}) \in \mathcal{L}(\calA_\varphi)) = \delta$.
\end{theorem}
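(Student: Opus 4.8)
The plan is to reduce the statement to a single combinatorial property of $\mathcal{L}(\calA_\varphi)$ and then invoke Proposition~\ref{prop:relatinglabelseq}. The property I need is closure under \emph{supersequences}: if a finite word $u$ is accepted by $\calA_\varphi$ and $u$ is a subsequence of a finite word $v$, then $v$ is accepted as well. Granting this, the randomness is handled in one line by the proposition.

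First I would establish the closure property as a lemma, using the simple-polynomial structure. For a single simple monomial, the accepted finite words are exactly those $w$ that admit a factorization $w = w_0 a_1 w_1 a_2 \cdots a_k w_k$ with each $w_i \in A^\ast$; equivalently, $w$ is accepted iff the fixed pattern $a_1 a_2 \cdots a_k$ embeds into $w$ as a scattered subsequence. Since the subsequence relation is transitive, whenever $a_1 \cdots a_k$ is a subsequence of $u$ and $u$ is a subsequence of $v$, the pattern is also a subsequence of $v$; hence each monomial's finite-word language is upward closed in the subsequence order, and a finite union (a simple polynomial) inherits this closure. The one place that needs care is the bookkeeping between the $\omega$-regular description of a simple polynomial and the finite good-prefix language that $\calA_\varphi$ actually recognizes: here I would use the co-safety property noted above, $T(q,a)\in F$ for every $q\in F$ and $a \in 2^\calAP$, to argue that the good prefixes are precisely the finite words containing the required symbols in order, so that the upward closure transfers to $\mathcal{L}(\calA_\varphi)$. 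I expect this reconciliation to be the main technical obstacle.

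With the lemma in hand the theorem follows quickly. By Proposition~\ref{prop:relatinglabelseq}, the event
\[
E := \{\, L^\delta(x_{0:\tau},\mathcal{P}) \text{ is a subsequence of } L(x_{0:\tau},\calM) \,\}
\]
has probability $\mathbb{P}(E) = \delta$. We are given $L^\delta(x_{0:\tau},\mathcal{P}) \in \mathcal{L}(\calA_\varphi)$, so on $E$ an accepted word is a subsequence of $L(x_{0:\tau},\calM)$, and the closure lemma gives $L(x_{0:\tau},\calM) \in \mathcal{L}(\calA_\varphi)$. Therefore $E \subseteq \{\, L(x_{0:\tau},\calM) \in \mathcal{L}(\calA_\varphi)\,\}$, which already yields the probabilistic correctness guarantee $\mathbb{P}(L(x_{0:\tau},\calM) \in \mathcal{L}(\calA_\varphi)) \ge \delta$. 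Turning this inclusion into the stated equality is the remaining delicate point: it requires that, up to a null event, the true map satisfies $\varphi$ along $x_{0:\tau}$ only on $E$, which I would justify by tying the satisfaction event back to the exact probability mass $\delta$ of the $\delta$-confidence region used in Proposition~\ref{prop:relatinglabelseq}; short of such a matching argument, the defensible and sufficient conclusion is the lower bound $\ge \delta$.
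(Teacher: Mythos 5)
Your proposal is correct and follows essentially the same route as the paper: the upward-closure property of simple polynomials (closure under inserting alphabet symbols, equivalently under supersequences) combined with Proposition~\ref{prop:relatinglabelseq} applied position-by-position to the label sequences. Your closing caveat is also warranted, since the paper's own proof likewise establishes only the lower bound $\mathbb{P}(L(x_{0:\tau},\calM) \in \mathcal{L}(\calA_\varphi)) \ge \delta$ --- it never excludes acceptance outside the confidence event --- so the equality claimed in the theorem statement is really a ``$\ge$'' in both arguments.
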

\begin{proof}
See Appendix~\ref{subsec:proof_thm1}.
\end{proof}

The significance of Thm. \ref{thm1} is that it allows us to reduce the stochastic 
control problem with an uncertain map (Problem \ref{prob:gssp}) to a deterministic
shortest path problem. We introduce the following  product system to 
facilitate the conversion.

\begin{definition}[$\delta$-Probably correct product system]
\label{def:product}
  Given the robot system in \eqref{eq:system}, the map distribution
  $\mathcal{P}$, the automaton $\calA_\varphi$, and a parameter
  $\delta\in (0,1)$, a \emph{$\delta$-probably correct product system}
  is a tuple $\calG^{\delta} = \langle S,U, \Delta,s_0, S_F \rangle$ defined as follows.
\begin{itemize}
  \item $S=X \times Q$ is the  product state space.
  \item $\Delta: S \times U \rightarrow S$ is a transition function such that 
    $\Delta ((x,q),u) = (x',q')$ where $x'=f(x,u)$ and $q'=T(q, L^\delta(x',\mathcal{P}))$.
    It is assumed that $T(q,\emptyset)=q,\; \forall q\in Q$.
  \item $s_0=(x_0,q_0)$ is the initial state.
  \item $S_F = X \times F$ is the set of final states.
\end{itemize}
\end{definition}

For the subclass of \ac{ltl} co-safe formulas whose languages are
simple polynomials, Thm.~\ref{thm1} guarantees that the projection on $X$ of any trajectory 
$s_{0:t}$ of $\calG^\delta$ that reaches $S_F$ in the $\delta$-confident map 
has probability $\delta$ of satisfying the specification in the true map.
The implications are explored in Sec.~\ref{sec:deterministic}.


Before we proceed, however, it is important to know to what extent the expressiveness
of \ac{ltl} is limited by restricting it to the subclass of simple polynomials. 
In Appendix~\ref{subsec:logic_sp}, we show that such \ac{ltl} formulas can express 
reachability and sequencing properties. Moreover, with a slight modification of 
Def.~\ref{def:product}, we can also ensure the correctness of plans with respect to safety constraints.

Consider safety constraints in the following form
$\square \phi_{safe}$ with $\phi_{safe}$ being a propositional logic
formula over $\calAP$. For example, an obstacle avoidance requirement
is given by $\square d(x,x_o)\ge r $ where $x_o$ are the coordinates
of an obstacle.  When the \ac{ltl} formula includes such safety
constraints, we need to modify the transition function in
Def.~\ref{def:product} in the following way. For any state $s\in S$
and any input $u\in U$, let $s'= (x',q') =\Delta(s,u)$. Then, if there
exists at least one $m$ in the $\delta$-confidence region of $\cal P$
such that the propositional logic formula corresponding to $L(x',m)$
implies\footnote{Given a label $L(x',m) \subseteq \calAP$, the corresponding
  propositional logic formula is
  $\bigwedge_{\alpha_i\in L(x',m)} \alpha_i \land \bigwedge_{\alpha_j \in
    \calAP \setminus L(x',m) } \neg \alpha_j$.} $\neg \phi_{safe}$, let $\Delta(s,u) = \sink$, where
$\sink $ is a non-accepting sink state that satisfies
$\Delta(\sink,u)= \sink$ for any $u\in U$. Thus, the state $\sink$ will not be visited
by any trajectory of $\calG^\delta$ that reaches $S_F$, which means the safety constraint 
will be satisfied with probability $\delta$ in the true environment. The following toy example illustrates the concepts.

\begin{example}
  In Fig.~\ref{fig:ex}, a mobile robot is tasked with visiting at least one 
  landmark in an uncertain environment. Formally, the
  \ac{ltl} specification of the task is $\varphi:= \lozenge p$ where
  $p:= d(x,l_i^p)\le 1$ for any landmark $i$. Given the robot
  trajectory $x_{0:t}$ represented by the dashed line in the figure,
  when the robot traverses the $95\%$-confidence region of $l_1$'s pose distribution,
  it cannot confidently (with confidence level $\delta=0.95$) decide the value 
  of $p$ in the true map because for some map realizations, $d(x,l_1^p) > 1$. 
  On the other hand, when the robot is near $l_2$, $p$ evaluates true because
  a unit ball around the robot covers the entire $95\%$-confidence region of $l_2$'s 
  pose distribution. Let $\mathcal{B}_r(l)$ a ball centered at $l$ with radius $r$. 
  The label sequence of $x_{0:t}$ in the true environment is 
  $L(x_{0:t}, \calM) = \emptyset^{k_1} \{p\}^{k_2} \emptyset^{k_3}\{
  p\} = \{p\}^{k_2+1}$
  where $k_1,k_2,k_3$ are the numbers of steps before reaching
  $\mathcal{B}_1(l^p_1)$, in $\mathcal{B}_1(l^p_1)$, and after
  leaving $\mathcal{B}_1(l^p_1)$ but before reaching
  $\mathcal{B}_2(l^p_2)$. The label sequence
  $L^{0.95}(x_{0:t}, \mathcal{P}) = \emptyset^{k_1+k_2+k_3}\{p\} =
  \{p\}$.
  Clearly,
  $ L^{0.95}(x_{0:t}, \mathcal{P}) $ is a subsequence of $ L(x_{0:t}, \calM) $.
  Moreover, the trajectory satisfies the \ac{ltl} specification which is a
  reachability constraint.

\begin{figure}[ht]
\centering
\includegraphics[width=0.5\textwidth,trim=10mm 15mm 0mm 10mm, clip]{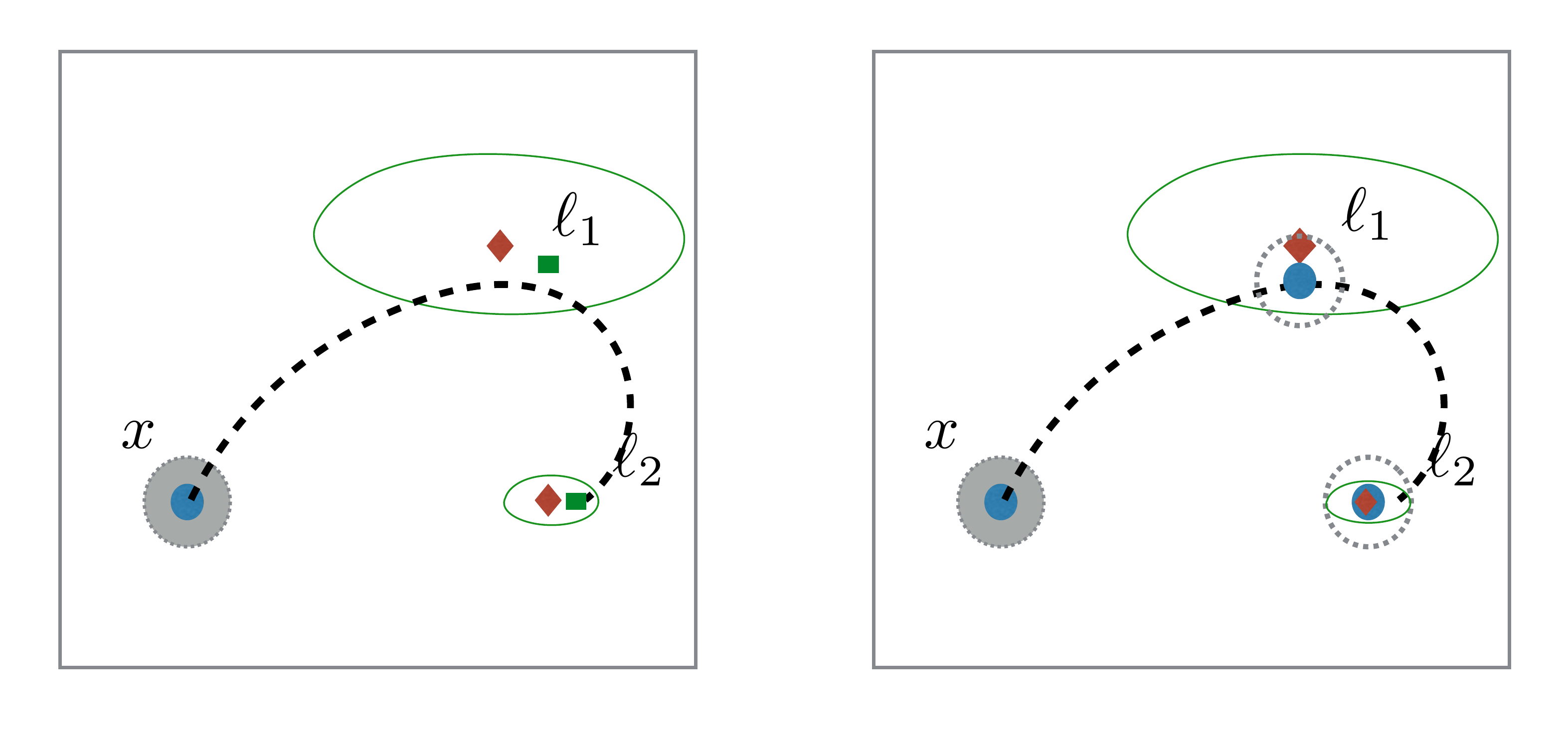}
\caption{Example: robot with a reachability objective. The red diamonds
  are the means of the landmarks' pose distributions and the ellipsoids are the
  associated $95\%$-confidence regions. The robot is
  represented by the blue dot. The green squares represent the true
  locations of landmarks $l_1$ and $l_2$. \label{fig:ex}}
\end{figure}

In the case of a safety constraint, e.g.,
$\varphi_{safe}= \square  p$ where $p:= d(x, l_i^p)> 1$ for any
landmark $i$, once the robot gets close to $l_1$ the $\delta$-probably
correct
product system will transition to the non-accepting sink because there 
exists a sample map $m$ such that the safety constraint is violated.
Thus, in any run that is safe in $\calG^\delta$, the robot is able to 
safely avoid both $l_1$ and $l_2$ with probability $\delta$.
\end{example}

\section{Reduction to Deterministic Shortest Path}
\label{sec:deterministic}
For a fixed confidence $\delta$, due to Thm. \ref{thm1}, we can
convert Problem \ref{prob:gssp} to a deterministic shortest path
problem within the probably correct product system $\calG^\delta$. In this case,
$\mathbb{P}(q_{\tau+1} \notin F) \in \{0,1\}$ and the optimal solution
to Problem \ref{prob:gssp} is either to stop immediately ($\tau =0$),
incurring cost $\kappa$, or to find a robot trajectory satisfying
$\varphi$ with cumulative motion cost less than $\kappa\delta$. The
latter corresponds to the following deterministic problem.

\begin{problem}
\label{prob:dsp}
Given an initial robot state $x_0 \in X$, a semantic map distribution
$\mathcal{P}$, a confidence $\delta \in (0,1)$, and an \ac{ltl}
co-safe formula $\varphi$ represented by a \ac{dfa} $\calA_\varphi$,
choose a stopping time $\tau$ and a sequence of control inputs
$u_t \in U$ for $t = 0,1,\ldots$ that minimize the motion cost of
a trajectory that satisfies $\varphi$:
\begin{align*}
  \min_{u_0,u_1,\ldots, u_\tau} \; &\sum_{t=0}^\tau c(x_t,x_{t+1})\\
  \text{s.t.} \quad & x_{t+1} = f(x_t,u_t),\\
  &q_{t+1} = T(q_t,L^\delta(x_t,\mathcal{P})),\; \forall 0\le t < \tau,\\
  &q_{\tau+1} \in F,\\
  &\sum_{t=0}^\tau c(x_t,x_{t+1}) \leq \kappa\delta.
\end{align*}
\end{problem}
If Problem \ref{prob:dsp} is infeasible, it is best in
Problem \ref{prob:gssp} to stop immediately ($\tau=0$), incurring cost $\kappa$; 
otherwise, the robot should follow the
control sequence $u^*_{0:\tau}$ computed above and the corresponding
trajectory $x^*_{0:\tau+1}$ to incur cost:
\[
\sum_{t=0}^\tau c(x_t^*,x^*_{t+1}) + \kappa (1 - \delta) \leq \kappa
\]
in the original Problem \ref{prob:gssp}. Since Problem \ref{prob:dsp}
is a deterministic shortest path problem, we can use any of the
traditional motion planning algorithms, such as RRT~\citep{rrt},
RRT*~\citep{rrt_star,rrg_mu_calc} or A*~\citep{ara_star} to solve
it. We choose A* due to its completeness guarantees\footnote{To guarantee completeness of A* for Problem \ref{prob:dsp}, the robot state space $X$ needs to be assumed bounded and compact and needs to discretized.}~\citep{astar} and
because the automaton $\calA_\varphi$ can be used to guide the search
as we show next.



\subsection{Admissible Heuristic}
The efficiency of A* can be increased dramatically by designing an
appropriate heuristic function to guide the search. Given a state
$s := (x,q)$ in the product system (Def. \ref{def:product}), a
heuristic function $h:S\rightarrow \mathbb{R}$ provides an estimate of the optimal cost $h^*(s)$ from $s$ to the goal set
$S_F$. If the heuristic function is \emph{admissible}, i.e., never
overestimates the cost-to-go ($h(s) \leq h^*(s), \; \forall s \in S$),
then A* is optimal~\citep{astar}.

\citet{LPH15} propose a distance metric to evaluate the progression of
an automaton state with respect to an \ac{ltl} co-safe formula. We use
a similar idea to design an admissible heuristic function. We
partition the state space $Q$ of $\calA_\varphi$ into level sets as
follows. Let $Q_0 := F$ and for $i \geq 0$ construct
$Q_{i+1} := \{q\in Q \setminus \bigcup_{k=0}^i Q_k \mid \exists q'\in
Q_i, a\in 2^\calAP, \text{ such that } T(q,a)=q' \}$.
The generation of level sets stops when $Q_i = \emptyset$ for some
$i$. Further, we denote the set of all sink states by
$Q_\infty$. Thus, given $q \in Q$ one can find a unique level set
$Q_i$ such that $q \in Q_i$. We say that $i$ is the level of $q$ and
denote it by $\level(q)=i$.


\begin{proposition}
  \label{prop:levels}
  Let $s_{0:t}$ be a trajectory of the product system $\calG^\delta$ that
  reaches $S_F$, i.e., $s_t\in S_F$. Then, for any $0\le k < t$, given
  $s_k=(x_k, q_k)$ and $s_{k+1}=(x_{k+1},q_{k+1})$, it holds that
  $\level(q_k)\le \level(q_{k+1})+1$.
\end{proposition}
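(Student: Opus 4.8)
The plan is to interpret the level function as a breadth-first distance to $F$ in the transition graph of $\calA_\varphi$, and then to argue that a single product transition can increase this distance by at most one. First I would make precise the characterization implicit in the level-set construction: for every $q\in Q$ of finite level,
\[
\level(q) = \min\{\, i \ge 0 \mid \exists\, w\in (2^\calAP)^i \text{ with } T(q,w)\in F \,\},
\]
i.e.\ $\level(q)$ is the length of the shortest input word that drives $q$ into $F$. This follows by induction on $i$ from the recursion $Q_0=F$ and $Q_{i+1}=\{q\in Q\setminus\bigcup_{k\le i}Q_k \mid \exists q'\in Q_i,\,a\in 2^\calAP,\ T(q,a)=q'\}$: the base case is immediate since $Q_0=F$, and the set subtraction $Q\setminus\bigcup_{k\le i}Q_k$ guarantees that the level sets partition the states that can reach $F$ and that membership in $Q_{i+1}$ is equivalent to reaching $Q_i$ (hence $F$) in exactly $i+1$ steps and no fewer. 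This is precisely the correctness of breadth-first layering.

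Next I would observe that every automaton state along the given trajectory has a finite level. Since $s_t\in S_F$ we have $q_t\in F=Q_0$, and for each $k<t$ the suffix $q_{k+1}q_{k+2}\ldots q_t$ is produced by the input letters $L^\delta(x_{j+1},\mathcal{P})$ for $k<j<t$; hence $q_{k+1}$ reaches $F$ in at most $t-(k+1)$ steps and $\level(q_{k+1})$ is well defined and finite. In particular the minimality characterization above applies to $q_{k+1}$.

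Now fix $k$ with $0\le k<t$. By Def.~\ref{def:product} the product transition satisfies $q_{k+1}=T(q_k,a)$ for the single alphabet letter $a:=L^\delta(x_{k+1},\mathcal{P})\in 2^\calAP$, so $q_{k+1}$ is reachable from $q_k$ in one step (this also covers $a=\emptyset$, where $T(q_k,\emptyset)=q_k=q_{k+1}$ by the standing assumption $T(q,\emptyset)=q$). Let $w$ be a shortest word with $T(q_{k+1},w)\in F$, so that $|w|=\level(q_{k+1})$. Then $T(q_k,aw)=T(q_{k+1},w)\in F$ and $aw$ has length $\level(q_{k+1})+1$; by the minimality characterization, $\level(q_k)\le |aw|=\level(q_{k+1})+1$, which is exactly the claim.

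The only delicate step is the first one: extracting the shortest-word characterization from the level recursion, in particular checking via the set subtraction that no state is assigned a level smaller than its true distance to $F$, and that the states which cannot reach $F$ (the sink set $Q_\infty$) are exactly those left unassigned by the recursion. The hypothesis that the trajectory reaches $S_F$ conveniently sidesteps this last subtlety, since it forces every $q_{k+1}$ on the trajectory to have finite level, after which the concatenation argument is routine.
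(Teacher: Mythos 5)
Your proof is correct, but it is structured quite differently from the paper's. The paper's proof is a one-line appeal to the recursive definition of the level sets: since $T(q_k, L^\delta(x_{k+1},\mathcal{P}))=q_{k+1}$, if $q_{k+1}\in Q_i$ then $q_k$ either already lies in $\bigcup_{j=0}^{i}Q_j$ (level at most $i$) or, by the very membership condition defining $Q_{i+1}$ (a one-step transition into $Q_i$ from a state not yet assigned a level), $q_k\in Q_{i+1}$; in both cases $\level(q_k)\le i+1$. You instead first prove a stronger intermediate lemma --- that $\level(q)$ equals the shortest-word (BFS) distance from $q$ to $F$ --- and then derive the claim by concatenation and minimality: $T(q_k,aw)\in F$ with $|aw|=\level(q_{k+1})+1$. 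Your route costs an extra induction (the BFS-correctness argument), which the paper avoids entirely, but it buys two things: a reusable metric interpretation of the level function (which is in fact the conceptual basis for the heuristic $h$ in Alg.~\ref{alg:astar}), and an explicit treatment of the finiteness issue --- you verify that every $q_{k+1}$ along a trajectory reaching $S_F$ has a well-defined finite level, a point the paper's proof silently assumes when it writes ``if $q_{k+1}\in Q_i$ for some level $i$.'' Both arguments are sound; the paper's is the minimal one, yours is the more self-contained and slightly more informative one.
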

\begin{proof}Since $T(q_k, L^\delta(x_{k+1},\mathcal{P}))=q_{k+1}$, if $q_{k+1} \in Q_i$ for some level $i$, then, by construction of the level sets, either $q_k\in Q_{i+1}$ or $q_k \in \bigcup_{j= 0}^i Q_j$.
\end{proof}

By construction of the level sets, the automaton states $q_{0:t}$,
associated with any trajectory $s_{0:t}$ of the product system that
reaches a goal state ($s_t \in S_F$), have to pass through the level
sets sequentially. In other words, if $\level(q_0) = i$, then there
exists a subsequence $q'_{0:i}$ of $q_{0:t}$ such that
$\level(q'_1) = i-1$, $\level(q'_2) = i-2,\ldots,\level(q'_i) =
0$.
Thus, we can construct a heuristic function that underestimates the
cost-to-go from some state $s:=(x,q) \in S$ with $\level(q)=i$ by
computing the minimum cost to to reach a state $s':=(x',q')$ such that
$\level(q') \in\{ i-1,i\}$ and $q\ne q'$. To do so, we determine all
the labels that trigger a transition from $q$ to $q'$ in
$\calA_\varphi$ and then find all the robot states $\mathcal{B}$ that
produce those labels. Then, $h(x,q)$ is the minimum distance from $x$
to the set $\mathcal{B}$. The details of this construction and other
functions needed for A* search with \ac{ltl} specifications, are
summarized in Alg. \ref{alg:astar}.

\begin{figure}[hb]
\centering
\includegraphics[width=0.5\textwidth]{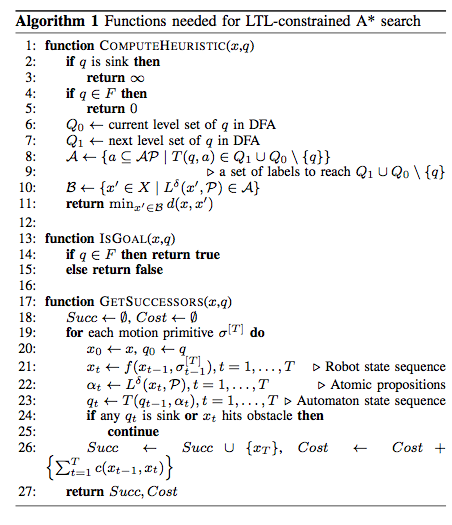}
\caption{Functions needed for LTL-constrained A* search}
\label{alg:astar}
\end{figure}

\begin{proposition}
\label{prop:admissible_h}
The heuristic function in Alg. \ref{alg:astar} is admissible.
\end{proposition}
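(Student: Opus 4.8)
The plan is to unfold the definition of admissibility and, for every non-goal state, exhibit a single robot waypoint that the heuristic ``charges for'' and that any goal-reaching trajectory is forced to visit. Fix $s=(x,q)\in S$. If $q\in F$ then $s\in S_F$, $h^*(s)=0$, and the algorithm returns $h(s)=0$, so $h(s)\le h^*(s)$ is immediate; likewise if no trajectory from $s$ reaches $S_F$ then $h^*(s)=\infty$ and there is nothing to prove. Hence I assume $\level(q)=i\ge 1$ and fix a cost-optimal trajectory $s_{0:t}$ with $s_0=(x,q)$, $s_t\in S_F$, and $h^*(s)=\sum_{k=0}^{t-1}c(x_k,x_{k+1})$. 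The reduction from accumulated path cost to a single straight-line charge is supplied by the triangle inequality assumed for $c$: for any index $m\le t$, repeated application gives $c(x_0,x_m)\le\sum_{k=0}^{m-1}c(x_k,x_{k+1})\le h^*(s)$. So it suffices to produce an index $m$ with $x_m\in\mathcal{B}$, where $\mathcal{B}$ is the set from Alg.~\ref{alg:astar} against which $h$ measures distance, since then $h(s)\le c(x_0,x_m)\le h^*(s)$.

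The natural choice of $m$ is the first automaton state change along the optimal trajectory. Because $\level(q_0)=i\ge1$ while $\level(q_t)=0$, the automaton cannot stay at $q$ forever, so there is a least $m$ with $q_0=\dots=q_{m-1}=q$ and $q_m\neq q$, triggered by the label $L^\delta(x_m,\mathcal{P})$. I then need $q_m$ to be one of the target states the heuristic enumerates, i.e. $q_m\neq q$ and $\level(q_m)\in\{i-1,i\}$, for this places $x_m\in\mathcal{B}$ and closes the argument via the display above. The lower bound $\level(q_m)\ge i-1$ is exactly Prop.~\ref{prop:levels} applied to the step from $q_{m-1}=q$ to $q_m$ (equivalently, the level-set construction forbids any single transition out of a level-$i$ state to land below level $i-1$).

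The hard part is the matching upper bound $\level(q_m)\le i$, i.e. ruling out that the optimal trajectory's first move \emph{regresses} to a strictly higher level, for then $x_m$ would fall outside $\mathcal{B}$ and the heuristic could overestimate. This is where the restriction to simple-polynomial languages does the work: for a finite union of simple monomials the automaton tracks, for each monomial, how many of its required symbols $a_1,a_2,\ldots$ have already been matched; each such matched-count is non-decreasing along any input word, and $\level(\cdot)$ equals the minimum over monomials of the number of still-unmatched symbols, so $\level$ is non-increasing along every trajectory and in particular $\level(q_m)\le\level(q)=i$. I would verify this monotonicity carefully, including that it survives DFA minimization so that the BFS levels $Q_i$ agree with the ``minimum remaining symbols'' quantity, since this is the only place the formula class enters and the claim genuinely fails for general co-safe formulas, where a transition from a level-$i$ state to a level-$(i+1)$ state is possible and would defeat the $\{i-1,i\}$-restricted heuristic. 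With $\level(q_m)\in\{i-1,i\}$ and $q_m\neq q$ in hand, $x_m\in\mathcal{B}$, and the triangle-inequality bound gives $h(s)\le c(x_0,x_m)\le h^*(s)$, establishing admissibility.
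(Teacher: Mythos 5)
Your proof is correct, but it takes a genuinely different route from the paper's. The paper proceeds by induction on $\level(q)$ and analyzes the first transition $(x,q)\to(x',q')$ of an optimal trajectory in three cases, $\level(q')=n$, $\level(q')=n+1$, or $\level(q')>n+1$; the last case (a level increase) is dispatched by jumping ahead to the first later state $(x'',q'')$ with $\level(q'')=n+1$ and applying the triangle inequality, and the simple-polynomial assumption is never explicitly invoked. You instead eliminate level increases altogether: your key lemma is that for simple-polynomial languages $\level$ is non-increasing along every run, so the first automaton-state change along the optimal path must land at level $i-1$ or $i$, placing $x_m\in\mathcal{B}$ and closing the bound with the triangle inequality. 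This buys rigor exactly where the paper is soft: in the paper's third case, the asserted inequality $c(x,x'')\geq h(x,q)$ requires $x''\in\mathcal{B}$, i.e.\ that the label at $x''$ moves $q$ itself (not the trajectory's current automaton state) to a state of level $n$ or $n+1$ distinct from $q$, which need not hold for a general co-safe DFA; your remark that the heuristic can genuinely overestimate when levels may increase is correct, and it is the simple-polynomial hypothesis that makes that case vacuous. Your reduction to the first index $m$ at which the automaton state changes also cleanly handles self-loop steps ($q'=q$), a subcase the paper's second case glosses over. One simplification you should adopt: there is no need for per-monomial match counts or worries about DFA minimization, since your monotonicity lemma follows in one line from the upward-closure property already used in the paper's proof of Thm.~\ref{thm1} — if $q$ is reachable via $u$ and $w$ is a shortest word accepted from $q$, then $uw\in\mathcal{L}(\calA_\varphi)$ implies $uaw\in\mathcal{L}(\calA_\varphi)$, hence $w$ is accepted from $T(q,a)$ and $\level(T(q,a))\leq\level(q)$ for every $a\in 2^{\calAP}$.
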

\begin{proof}
See Appendix~\ref{subsec:proof_admissible_h}.
\end{proof}

Prop. \ref{prop:admissible_h} guarantees that A* will either find the optimal solution to Problem \ref{prob:dsp} or will report that Problem \ref{prob:dsp} is infeasible. In the latter case, the robot cannot satisfy the logic specification with confidence $\delta$ and it should either reduce $\delta$ or stop planning.

Note that while the heuristic function is admissible, it is not guaranteed that it is also consistent. Consider two arbitrary states $(x,q)$ and $(x',q')$ with $\level(q) = n$ and $\level(q')=n+1$. It is possible that the cost to get from $(x,q)$ to a place in the environment, where a transition to level $n-1$ occurs, is very large, i.e., $h(x,q)$ is large, but it might be very cheap to get from $(x',q')$ to $(x,q)$ and vice versa. In other words, it is possible that the following inequalities hold:
\[
c(x,x') + h(x',q') \leq c(x,x') + c(x',x) < h(x,q),
\]
which makes the heuristic in Alg. \ref{alg:astar} inconsistent. We emphasize that, even with an inconsistent heuristic, $A^*$ can be very efficient if a technique such as bi-directional pathmax is employed to propagate heuristics between neighboring states~\citep{astar_incos}.

\subsection{Accelerating A* Search using Motion Primitives} 
\begin{figure}[t]
  \centering
  \includegraphics[width=0.8\linewidth]{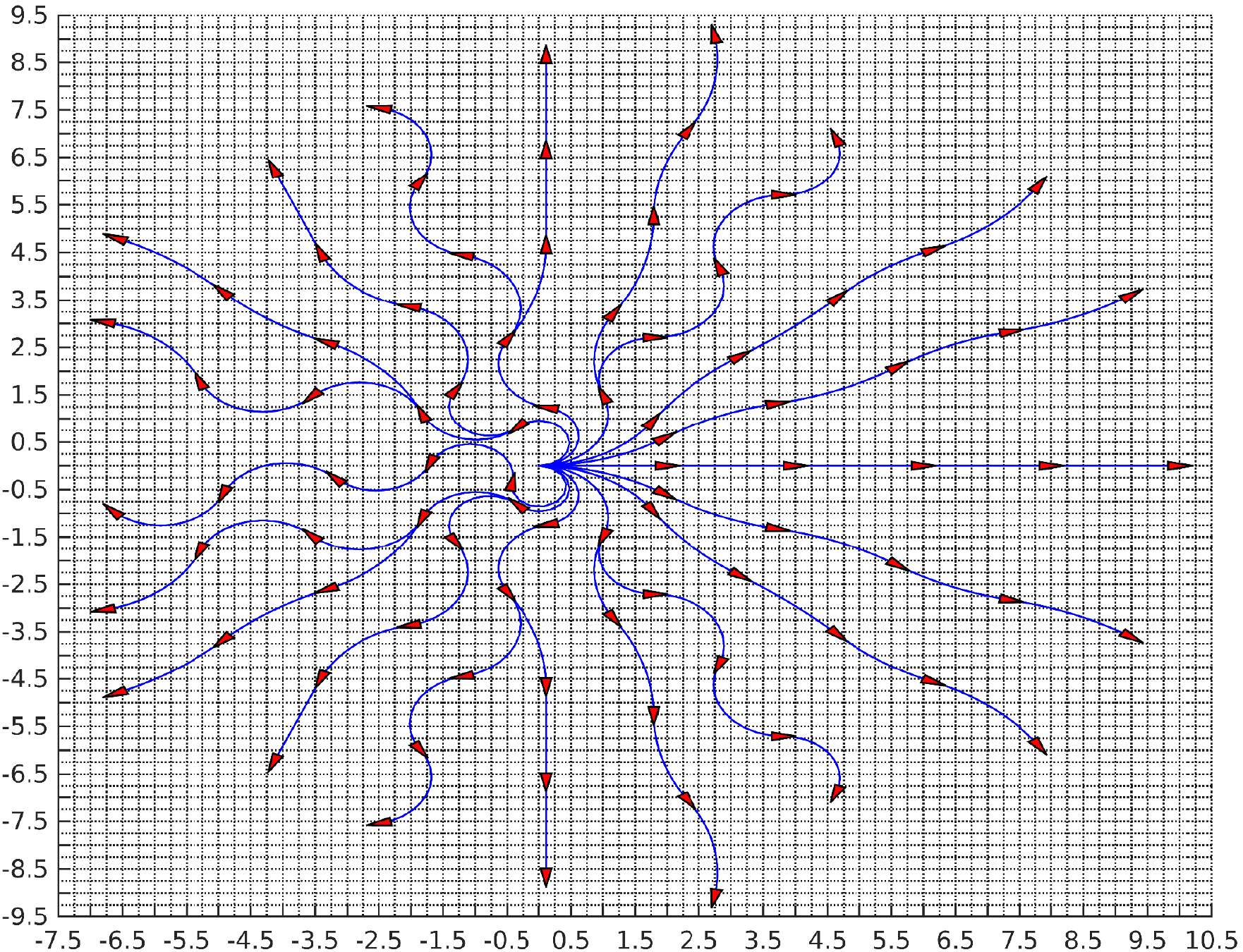}
  \caption{The set of motion primitives used for LTL-constrained A* planning. Each segment contains 5 robot poses indicated by the red triangles. Each robot pose represents a different motion primitive.}
  \label{fig:motion_primitives}
\end{figure}

If willing to sacrifice completeness, one can significantly accelerate the A* search by using motion primitives for the robot in order to construct a lattice-based graph \citep{mprim}. The advantage of such a construction is that the underlying graph is sparse and composed of dynamically-feasible robot trajectories that can incorporate a variety of constraints. We present motion primitives for a differential-drive robot in Fig. \ref{fig:motion_primitives} but much more general models can be handled \citep{mprim_humanoid}.

A motion primitive is similar to the notion of macro-action \cite{macro_actions71,macro_actions05} and consists of a collection of control inputs $\sigma^{[T]}:=(u_0,u_2,\ldots,u_{T-1})$ that are applied sequentially to a robot state $x_t$ so that:
\[
x_{t+1+k} = f(x_{t+k}, \sigma^{[T]}(k)), \; k = 0,\ldots,T-1.
\]
Instead of using the original control set $U$, we can plan with a set $\bar{U}:=\{\sigma^{[T_j]}_j\}$ of motion primitives (see $\textproc{\footnotesize GetSuccessors}$ in Alg. \ref{alg:astar}). In our experiments, the motion primitives were designed offline. Twenty
locations with outward facing orientations were chosen on the
perimeter of a circle of radius 10 m. A differential-drive controller
was used to to generate a control sequence of length $5$ that would
lead a robot at the origin to each of the selected
locations. Fig. \ref{fig:motion_primitives} shows the resulting set of
motion primitives. They are wavy because the controller tries to
follow a straight line using a discrete set of velocity and
angular-velocity inputs.

\noindent{\textbf{Summary.}}
We formulate the temporal logic planning
with a semantic map distribution as a stochastic optimal control
problem (Problem~\ref{prob:gssp}). Since Problem~\ref{prob:gssp} is
intractable, we reduce it to a deterministic shortest path problem (Problem~\ref{prob:dsp})
with probabilistic correctness guaranteed by Thm~\ref{thm1}. We can solve Problem~\ref{prob:dsp} optimally using A* because the heuristic function proposed in Alg. \ref{alg:astar} is admissible (Prop. \ref{prop:admissible_h}). The obtained solution is partial
with respect to Problem~\ref{prob:gssp} because, rather than a controller that trades off the probability of satisfying the specification and the total motion cost, it provides the optimal controller in a subspace of
deterministic controllers that guarantee that the probability of satisfying the specification is $\delta$.

\section{Examples}
\label{sec:application}

In this section, we demonstrate the method for \ac{ltl}-constrained
motion planning with a differential-drive robot with state
$\mathsf{x}_t := (x_t,y_t,\theta_t)^T \in SE(2)$, where $(x_t,y_t)$
and $\theta_t$ are the 2D position and orientation of the robot,
respectively. The kinematics of the robot are discretized using a
sampling period $\tau$ as follows:
\begin{equation*}
\begin{pmatrix}
x_{t+1}\\
y_{t+1}\\
\theta_{t+1}
\end{pmatrix}\!=\!
\begin{pmatrix}
x_{t}\\
y_{t}\\
\theta_{t}
\end{pmatrix} +
\begin{cases}
\begin{pmatrix}
\tau \nu \cos(\theta_t+\tau \omega/2)\\
\tau \nu \sin(\theta_t +\tau \omega/2)\\
\tau \omega
\end{pmatrix},\; \abs{\tau\omega}<0.001,\\
\begin{pmatrix}
 \frac{\nu}{\omega}( \sin(\theta_t+\tau \omega)-\sin \theta_t)\\
\frac{\nu}{\omega}( \cos \theta_t) -\cos(\theta_t+\tau \omega))\\
\tau \omega
\end{pmatrix},\; \textrm{else.}
\end{cases}
\end{equation*}
The mobile robot is controlled by motion primitives in
Fig. \ref{fig:motion_primitives}, whose segments are specified by
$\nu = 1\;m/\textrm{s}$, $\tau = 2\;\textrm{s}$, and
$\omega \in [-3,3]\;\textrm{rad}/\textrm{s}$.

The \ac{ltl} constraints were specified over the two types of atomic
propositions for object classes
$\mathcal{C} = \{\sq, \cir,\hex, \dia, \tri\}$. Proposition
$\alpha_i^c(h)$ means the class of $i$-th landmark is $h$ for
$h \in \mathcal{C}$. Proposition
$\alpha_i^p(r) :\; d(x^p -l_i^p) \le r$ means the robot is $r$-close
to landmark $l_i$.

The following \ac{ltl} specification was given to
the robot:
\[
\varphi :\;( \lozenge (\phi_1 \land \lozenge(\phi_2\land \lozenge \phi_3))
  \land \lozenge \phi_4 ) \land \square \phi_{safe}
\]
where $\phi_i$, $i=1,\ldots, 4$ are the following propositional logic formulas:
\begin{align*}
\phi_1:&=\alpha_i^p(1) \land \alpha_i^c(\tri),  &i& \in \{1,\ldots, M\}\\
\phi_2:&=\alpha_i^p(2) \land \alpha_i^c(\dia) \land \alpha_j^p(2) \land \alpha_j^c(\cir), &i,j&\in \{1,\ldots, M\}\\
\phi_3:&=\alpha_i^p(2) \land \alpha_i^c(\sq) \land \alpha_j^p(2) \land \alpha_j^c(\cir),  &i,j&\in \{1,\ldots, M\}\\
\phi_4:&=\alpha_i^p(1) \land \alpha_i^c(\hex),  &i& \in \{1,\ldots, M\}
\end{align*}
and for $i,j\in \{1,\ldots, M\}$ the safety constraint is:
\begin{flalign*}
&\phi_{safe}:= \square \neg (\alpha_i^p(2) \land \alpha_i^c(\sq) \land \alpha_j^p(2)\land \alpha_j^c(\sq)).&
\end{flalign*}
In other words, the robot needs to first visit a triangle, then go to a region where it is close to both a circle and a diamond, and finally visit a region where it is close to both a circle and a square, while visiting a hexagon at some point and avoiding getting stuck between any two squares.

Several case studies were carried out using a simulated semantic map distribution.
Robot trajectories with least cost that satisfy the \ac{ltl}
specification with confidence $\delta = 0.95$ for different initial
conditions were computed with A* and are shown in
Fig.~\ref{fig:traj}. Optimal paths with the same initial conditions
but different confidence parameters $\delta=0.95$ and $0.5$ are shown
in Fig.~\ref{fig:traj_diffconf}. As expected, we observe a trade-off
between the probability of satisfying the \ac{ltl} formula and the
total cost of the path. With a lower confidence ($\delta= 0.5$), the
total cost for satisfying the \ac{ltl} formula is also lower than that
of a path which satisfies the formula with a high confidence
($\delta=0.95$). Particularly, the uncertainty in the pose of triangle
$l_1$ with pose distribution $\bar l_1^p=(2.5,1.19)$ is the main reason
for the difference in the planned trajectories. With $\delta =0.95$,
even though the robot can reach the vicinity of triangle $l_1$, it
does not have enough confidence to ensure that the triangle would be
visited. Instead, it plans to visit another triangle $l_2$ for which
the uncertainty in the pose distribution $\bar l_2^p=(-5.18,12.04)$ is
smaller. Reducing the confidence requirements allows the robot to plan
a path that visits triangle $l_1$ and has a lower total cost compared
to that of visiting triangle $\l_2$.

\section{Conclusion}
\label{sec:conclusion}
This paper proposes an approach for planning optimal robot
trajectories that probabilistically satisfy temporal logic
specifications in uncertain semantic environments. By introducing a
$\delta$-confident labeling function, we show that the original stochastic optimal control problem in the continuous space of semantic map distributions can be reduced to a deterministic optimal control problem in the $\delta$-confidence region of the map distribution. Guided by the automaton representation of the \ac{ltl} co-safe specification, we develop an admissible A* algorithm to solve the deterministic problem. The advantage of our approach is that the deterministic problem can be solved very efficiently and yet the planned robot trajectory is guaranteed to have minimum cost and to satisfy the logic specification with probability $\delta$.

This work takes an initial step towards integration of semantic \ac{slam} and motion planning under temporal logic constraints. In future work, we plan to extend this method to handle the following:
\begin{inparaenum}[1)]
  \item landmark class uncertainty,
  \item robot motion uncertainty,
  \item a more general class of \ac{ltl} specifications,
  \item map distributions that are changing online.
\end{inparaenum}
Our goal is to develop a coherent approach for planning autonomous robot behaviors that accomplish high-level temporal logic tasks in uncertain semantic environments.

\begin{figure}[t]
  \centering
\begin{subfigure}[b]{0.5\textwidth}
  \includegraphics[width=\textwidth]{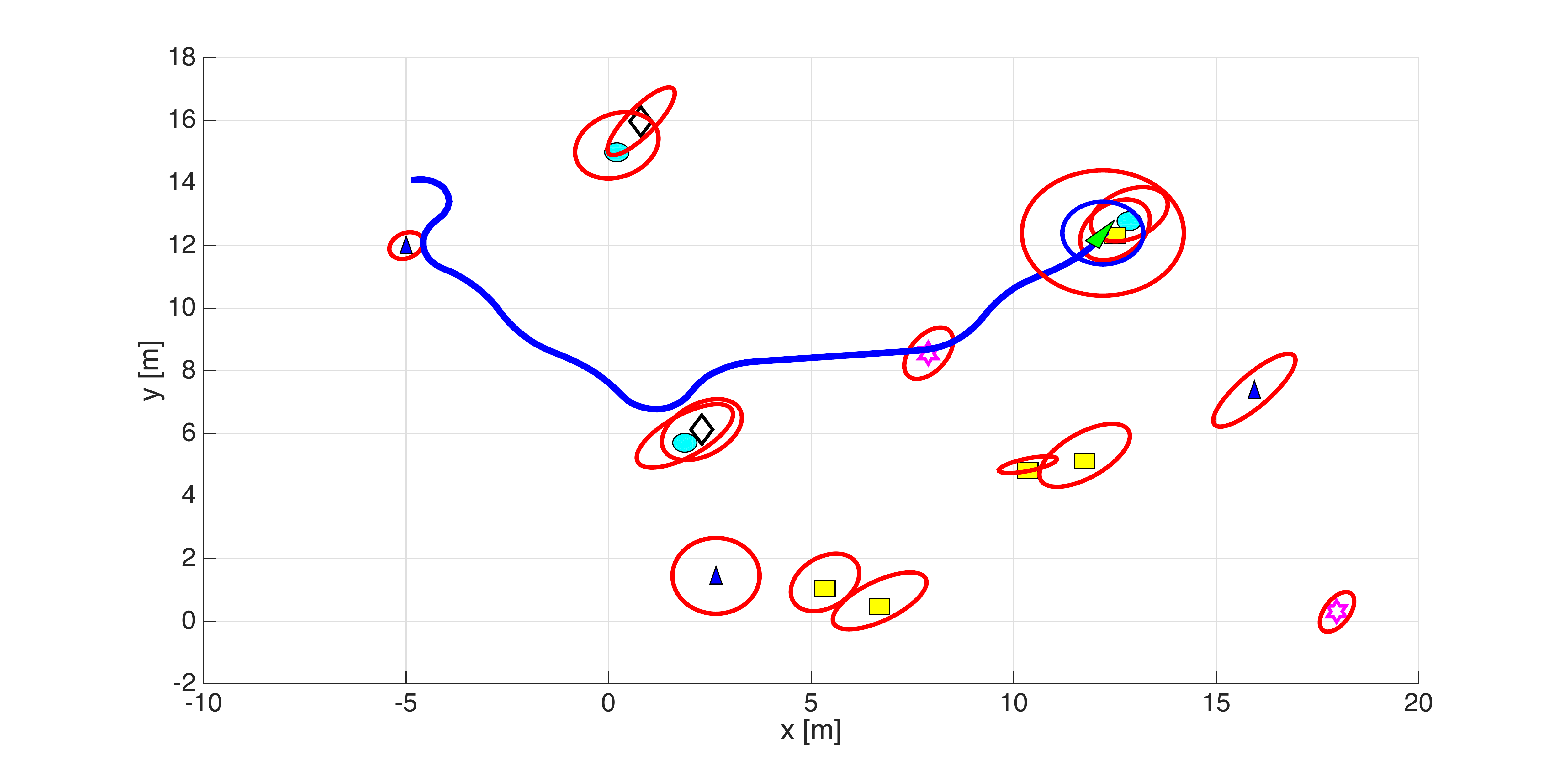}
\caption{$x_0=[-5,14,\pi/6]^T$, total motion cost $24\;m$.}
\end{subfigure}
\begin{subfigure}[b]{0.5\textwidth}
\includegraphics[width=\textwidth]{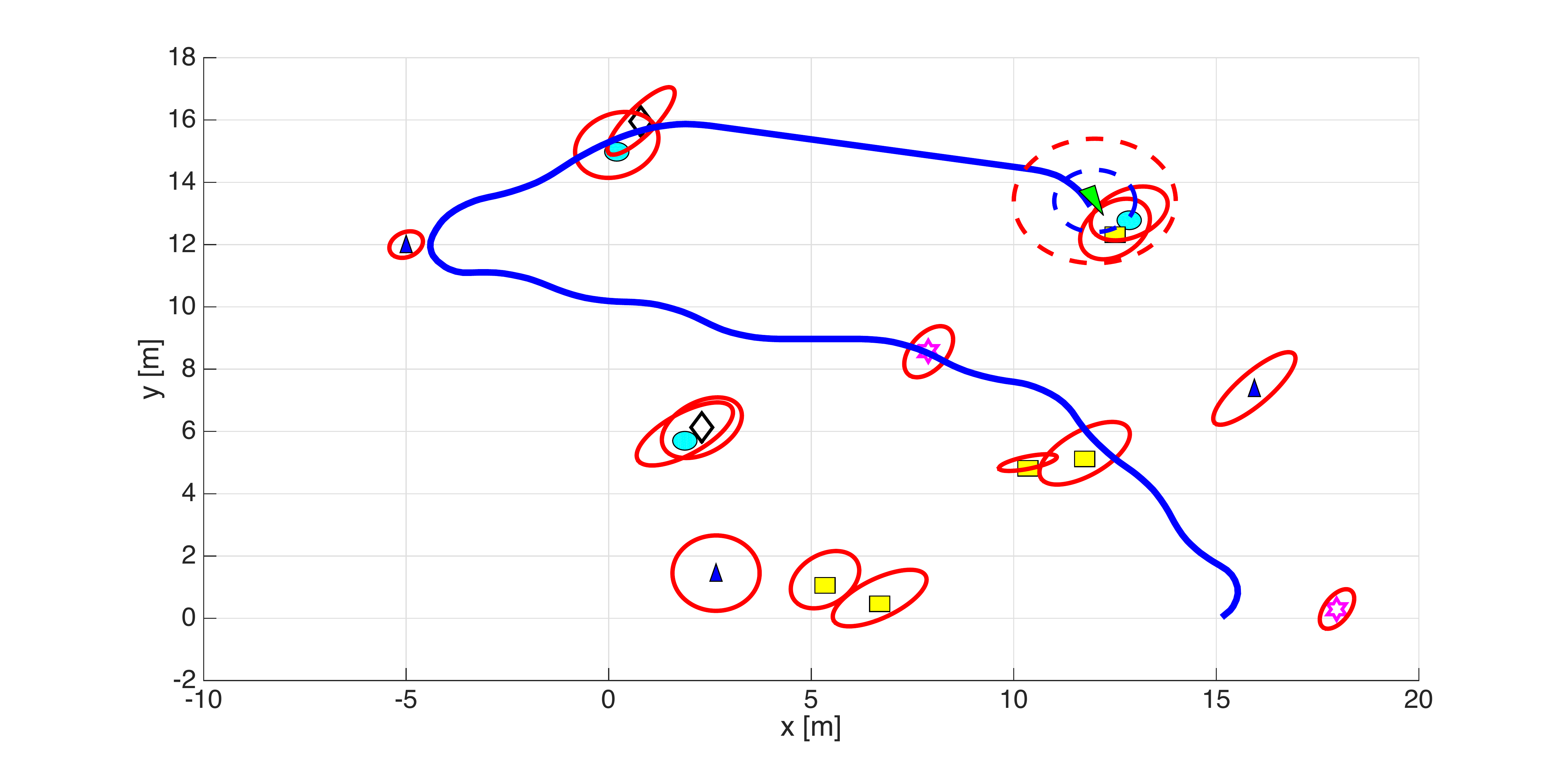}
\caption{$x_0=[15,0,\pi/6]^T$, total motion cost $44\;m$.}
\end{subfigure}
\caption{Robot trajectories obtained by the planning algorithm
  with different initial conditions. The green triangle represents the
  mobile robot. The blue dash circle indicates the
  unit ball and the red dash circle indicates a ball of radius $2\;m$.
  The red sold ellipses represent the $0.95$-confidence region of
  the landmark pose distributions. The landmark coordinates are drawn
  according to the mean of the pose distribution. \label{fig:traj}}
\end{figure} 

\begin{figure}[t]
\centering
\begin{subfigure}[b]{0.5\textwidth}
\centering
\includegraphics[width=\textwidth]{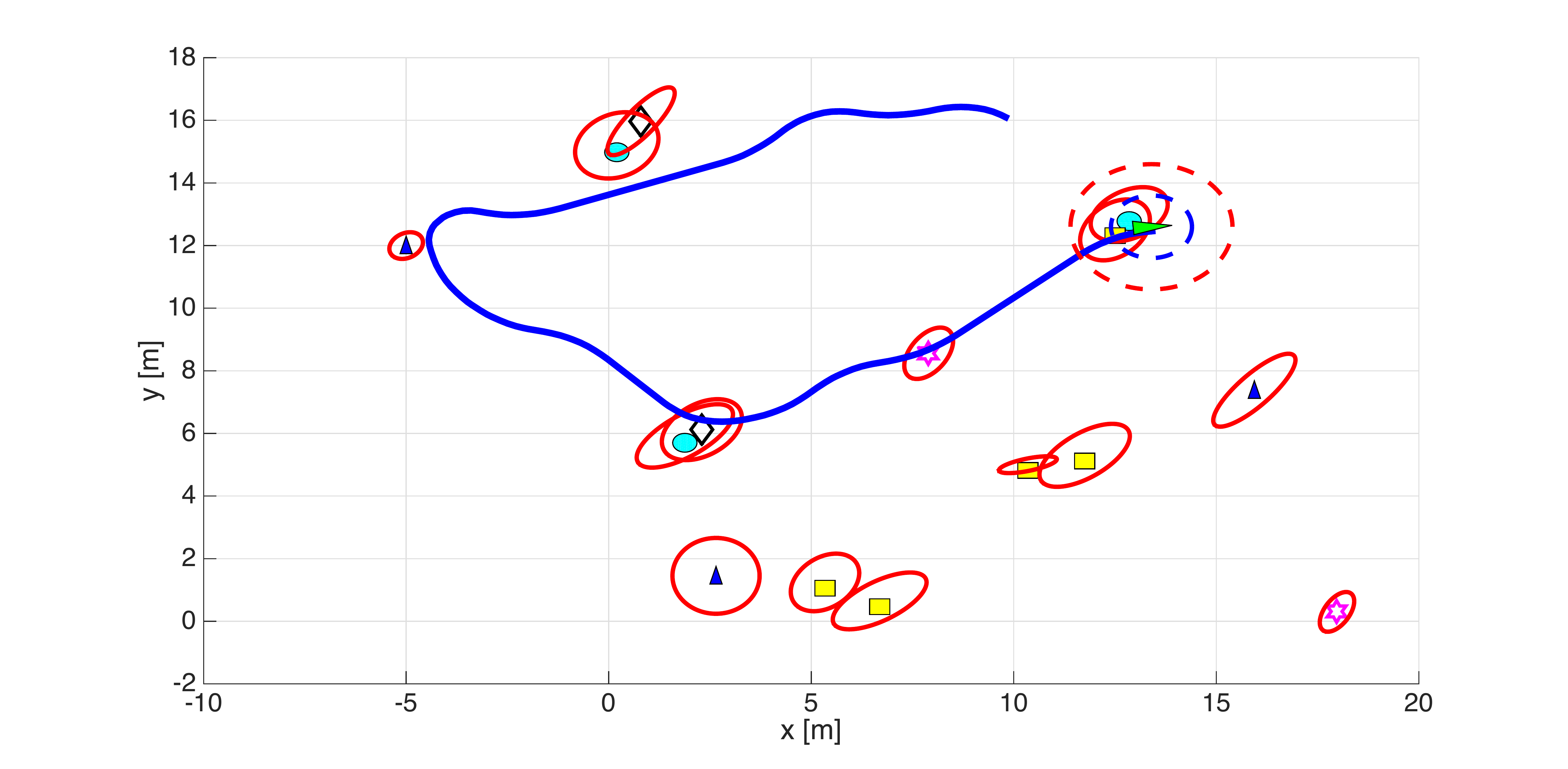}
\caption{The optimal plan with $0.95$-confidence and motion cost $38\;m$.}
\end{subfigure}
\vspace{2ex}
\begin{subfigure}[b]{0.5\textwidth}
\centering
  \includegraphics[width=\textwidth]{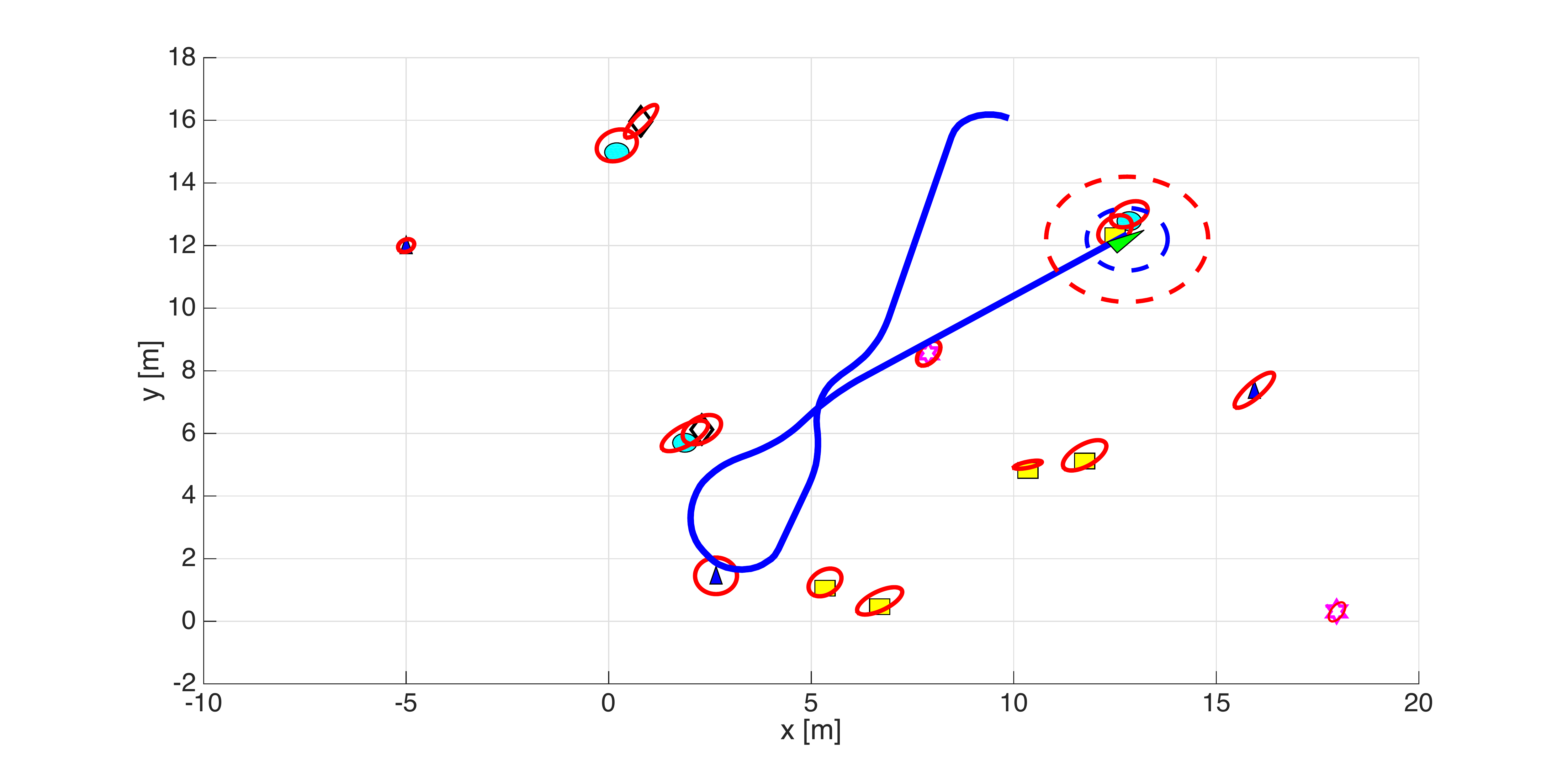}
\caption{The optimal plan with $0.5$-confidence and motion cost $34\;m$.}
\end{subfigure}
\caption{Robot trajectories obtained by the planning algorithm
  with different confidence parameters. The initial state is $x_0=[10,16,0.77\pi]^T$. \label{fig:traj_diffconf} }
\end{figure}


\appendix
\section{Appendix}
\label{sec:appendix}
\subsection{Proof of Prop. \ref{prop:relatinglabelseq}}
\label{subsec:proof_relatinglabelseq}
  Let $x_k$ be the state at time $k$. If for all samples $m$ in the $\delta$-confidence
  region of $\mathcal{P}$, $L(x_k,m)= L(x_k, \bar \calM)$, then $L^\delta(x_k, \mathcal{P}) = L(x_k,\bar \calM)$,
  which equals $L(x_k, \calM)$ with probability $\delta$. Otherwise, $L^\delta(x_k, \mathcal{P}) = \lambda$ (empty string).
  Since $k$ is arbitrary, we can conclude that $L^\delta(x_{0:t},
  \mathcal{P}) $ is a subsequence of $L(x_{0:t}, \calM)$ with
  probability $\delta$. \qed

\subsection{Proof of Thm. \ref{thm1}}
\label{subsec:proof_thm1}
Since the language $\mathcal{L}(\calA_\varphi)$ is a simple
polynomial, the following \emph{upward closure}
\cite{pin2010mathematical} property holds: For any word
$uv\in \mathcal{L}(\calA_\varphi)$ and any $a\in 2^\calAP$, it holds
that $uav \in \mathcal{L}(\calA_\varphi)$. In other words, if any empty string 
in a word from a simple polynomial language is replaced by a symbol in the alphabet, 
then the resulting word is still in the language.


For a given robot trajectory $x_{0:\tau}$, let
$L^\delta(x_{0:\tau}, \mathcal{P}) =b_0b_1\ldots b_\tau\in
\mathcal{L}(\calA_\varphi)$
be the $\delta$-confident label sequence and
$L(x_{0:\tau},\calM) = a_0a_1\ldots a_\tau$ be the true label
sequence. According to Prop.~\ref{prop:relatinglabelseq}, for each
$0\le i \le \tau$, either $a_i=b_i$ or $a_i\ne b_i$ and $b_i=\emptyset$. 
Thus, if $b_0b_1\ldots b_\tau$ belongs to
$\mathcal{L}(\calA_\varphi)$, $a_0a_1\ldots a_\tau$ must be in
$\mathcal{L}(\calA_\varphi)$ because it is obtained by replacing each empty string 
in $b_0b_1\ldots b_\tau$ with some symbol in $2^\calAP$ 
and the language $\mathcal{L}(\calA_\varphi) $ is
upward closed.\qed


\subsection{Characterization of \ac{ltl} co-safe formulas that
  translate to simple polynomials}
\label{subsec:logic_sp}
Formally, the subset of \ac{ltl} formulas is
defined by the grammar
\begin{equation}
\label{eq:grammar}\varphi:= \varphi_{reach} \mid \varphi_{seq} \mid \varphi \land
\varphi \mid \varphi \lor \varphi,
\end{equation}
where
$ \varphi_{reach} = \lozenge \phi \mid \varphi_{reach} \land
\varphi_{reach} \mid \varphi_{reach}\lor \varphi_{reach}$
represents the reachability,
$\varphi_{seq}: = \phi \mid \lozenge \varphi_{seq} \mid \lozenge (
\varphi_{seq} \land \lozenge \varphi_{seq})$
is a set of formulas describing sequencing constraints.  Here, $\phi$
is a propositional logic formula. 

\subsection{Proof of Prop. \ref{prop:admissible_h}}
\label{subsec:proof_admissible_h}
We proceed by induction on the levels in $\calA_\varphi$. In the base
case, $q \in Q_0 \equiv F$ and $h(x,q) = 0$ for any $x \in X$.
Suppose that the proposition is true for level $n$ and let $(x,q)$ be some state with $x \in X$ and $\level(q)=n+1$. As before, let $h^*(x,q)$ be the optimal cost-to-go. Due to Prop. \ref{prop:levels}, there are only three possibilities for the next state $(x',q')$ along the optimal path starting from $(x,q)$:
\begin{itemize}
  \item $\level(q') = n$: By construction of $h$:
    \[
     h^*(x,q) = c(x,x') + h^*(x',q') \geq h(x,q) + 0.
    \]
  \item $\level(q') = n+1$: Same conclusion as above.
  \item $\level(q') = k > n+1$: In this case, there exists another state $(x'',q'')$ later along the optimal path such that $\level(q'')=n+1$ (otherwise the optimal path cannot reach the goal set). Then, by the triangle inequality for the motion cost $c$:
    \begin{align*}
      h^*(x,q)&= c(x,x') + c(x',x'') + h^*(x'',q'')\\
              &\geq c(x,x'') + 0 \geq h(x,q)
    \end{align*}
\end{itemize}
Thus, we conclude that $h^*(x,q) \geq h(x,q) \geq 0$ for all $x \in X$ and $q \in Q$. \qed

{ \small
\bibliographystyle{abbrvplainnat}
\bibliography{ref.bib}
}

\end{document}